\definecolor{wacvblue}{rgb}{0.21,0.49,0.74}
\newtheorem{proposition}{Proposition}
\newtheorem{lemma}{Lemma}
\title{CLARGA: Multimodal Graph Representation Learning over Arbitrary Sets of Modalities}
\author{Santosh Patapati\\
\\
}
\begin{document}
\maketitle
\begin{abstract}
We introduce CLARGA, a general-purpose multimodal fusion architecture for multimodal representation learning that works with any number and type of modalities without changing the underlying framework. Given a supervised dataset, CLARGA can be applied to virtually any machine learning task to %
fuse different multimodal representations for processing by downstream layers. On a sample-by-sample basis, CLARGA learns how modalities should inform one another by building an attention weighted graph over their features and passing messages along this graph with a multi-head Graph Attention Network. Not only does this make CLARGA highly adaptive, as it constructs unique graphs for different samples, it makes for efficient fusion with sub-quadratic complexity as the number of modalities grows. Through a learnable mask, it can also adapt to missing modality inputs. The model is trained with a hybrid objective that combines a supervised task loss with contrastive InfoNCE loss, improving cross-modal consistency and robustness to noisy inputs. We demonstrate CLARGA's effectiveness in diverse multimodal representation learning tasks across 7 datasets spanning finance, human-computer interaction, general multimedia classification, and affective computing. It consistently outperforms baselines, state-of-the-art models, and ablations. Additional experiments also demonstrate its robustness to missing inputs and ability to excel on niche tasks.
Overall, CLARGA can be easily plugged into machine learning models for effective and efficient learning of representations %
across a 
wide
variety of tasks. %

\end{abstract}

\section{Introduction}
Multimodal data—images, audio, text, time-series, and more—are now commonplace across a variety of domains \cite{baltrusaitis2018multimodal}. Effectively representing and fusing information that arrives in different sensory or semantic formats has therefore become a core challenge of machine learning, with a wide range of applications including chemistry \cite{li2024chemvlm}, physics \cite{huerta2019enabling}, and healthcare \cite{kline2022multimodal}. %
Naïve strategies that simply concatenate feature vectors ("early fusion") or average modality-level decisions ("late fusion") often struggle with heterogeneity, quadratic growth in parameters, and sensitive behavior when some inputs are missing \cite{atrey2010multimodal, snoek2005early, baltrusaitis2018multimodal}. Likewise, end-to-end attention-based fusion models typically require paired data for every modality at training time \cite{tsai2019multimodal, hori2017attentionbased}. Such models may scale quadratically with the number of modalities, making them costly or infeasible for real-world applications \cite{nagrani2021attention, golovanevsky2023one}.

We introduce Contrastive Learning with Adaptive Residual Graph Attention (CLARGA), a general-purpose fusion architecture that accepts arbitrary numbers and types of modalities. It models modality's embeddings as different nodes in a fully connected graph. 
We use a lightweight, multi-head Graph Attention Network (GAT) \cite{velivckovic2018graph} to learn how strongly nodes should connect across modalities. This lets each modality selectively draw information from the others while keeping time complexity below quadratic in the number of nodes, so the model scales to larger graphs. We add residual connections and Layer Normalization to make deeper message passing stable and to reduce oversmoothing, where node representations become indistinguishable. During training, CLARGA optimizes a hybrid objective. That is, a standard supervised loss for the task combined with an InfoNCE contrastive term that essentially encourages each unimodal embedding to agree with the fused representation. This contrastive component helps align modalities and makes the fused features more robust when some inputs are noisy or incomplete.

Our contributions are as follows:

\begin{enumerate}
    \item We introduce CLARGA, a graph-based %
    framework for multimodal representation learning that works with {any} number and type of modalities without changing the architecture.
    \item We couple this adaptive graph fusion with a lightweight contrastive alignment loss (InfoNCE) that pulls each unimodal embedding toward the fused representation, promoting agreement across modalities.
    \item We handle missing inputs with a learnable {mask token} that stands in for absent modalities and is processed by the same attention mechanism, making fusion robust to missing or noisy data. %
    \item We construct novel proofs that build on existing theory. Specifically, we provide: (i) universality of the fusion block as an approximator of any continuous permutation-invariant function; (ii) a Lipschitz bound quantifying the impact of a missing modality on the fused vector; and (iii) a Rademacher complexity bound for the joint supervised contrastive loss.
    \item Across seven public benchmarks, CLARGA consistently outperforms strong baselines, state-of-the-art fusion methods, and ablations, also achieving state-of-the-art results on the DAIC-WoZ dataset.
\end{enumerate}

\section{Related Works}
\subsection{Early Fusion Architectures}
Early work used feature-level fusion%
, where raw features from all modalities are concatenated and fed into a single model, and decision-level fusion%
, where each modality is processed by a separate model and their final predictions are aggregated \cite{Poria2016, Roitberg2022}. 
Despite their simplicity, both approaches struggle with heterogeneity, quadratic growth of parameters, and missing inputs.

\subsection{Attention-based \& Transformer-style Fusion}
Recent systems are beginning to use learned attention mechanisms that let the model weight modalities on a per-sample basis. Crossmodal attention \cite{Wei2020} %
has become popular for fusing information across modalities.
MulT \cite{tsai2019multimodal} injects attention across every stream so that one modality can guide another without explicit temporal alignment.
(Nagrani et al., 2021) \cite{nagrani2021attention} builds on this with the Multimodal bottleneck Transformer to funnel all crossmodal interactions through a few shared bottleneck tokens per layer, compressing attention and cutting its quadratic cost while creating rich crossmodal exchange.

\subsection{Graph-based Fusion}
Graph-based architectures represent heterogeneous inputs as nodes and edges in a graph, enabling flexible relational modeling. Graph Attention Networks \cite{velivckovic2018graph} extend this by learning attention weights on edge. %
Graph-based fusion, a newer approach, begins by building a graph in which nodes encode information about each modality (e.g., entire audio or video modalities), or even finer-grained elements (e.g., objects or words) \cite{ektefaie2023multimodallearninggraphs, huang2024multimodalrepresentationlearningusing, Li2022GraphMFT}. %
Nodes are connected by edges which encode the relationships between the modalities. %

\subsection{Learning Shared Representations}
Beyond task-specific fusion, %
models can learn modality-agnostic embeddings that can be reused across tasks. Deep CCA \cite{Andrew2013, Benton2017} trains per-modality encoders to maximize correlation of paired samples. Other variants combine this with task-specific losses (e.g., ViLBERT \cite{Lu2019}) to preserve modality-specific details while aligning shared structures.

\subsection{Contrastive Alignment}
Contrastive learning %
has delivered  gains in zero-shot transfer across many tasks \cite{jang2024significantly, patapati2025clipmgguidingsemanticattention, Sinha2024, Lu2024}. CLIP \cite{Radford2021CLIP} aligns image and text encoders by pulling matches pairs together in embedding space and pushing mismatches apart. 
For example, DGI \cite{Velickovic2019DGI} uses a contrastive InfoMax objective \cite{Hjelm2018DeepInfoMax} to maximize mutual information between local node patch embeddings and global graph summary for unsupervised representation learning.

\subsection{Handling Missing Modalities}
Real-world deployments can suffer from missing modalities, which model architectures have to account for. 
Many models that address this issue employ imputation \cite{Tran2017, Zhang2023, Poudel2025}, robust training \cite{Ma2022, Woo2022}, or shared-specific factorization \cite{Tsai2018, Wang2023}. Across these methods, learned mask tokens and contrastive alignment losses have emerged as lightweight yet effective tools. The former gives models an explicit symbol for missing data \cite{Ramazanova2025, Maheshwari2024, Kim2024MMP}, while the latter keeps partial-input embeddings close to the manifold learned from complete data \cite{Lin2023MissModal, He2023COM, Ghorbani2023}. Both have been demonstrated to be effective independently.

\section{Methodology}
\subsection{Problem Setup}

\subsubsection{Data and Notation}
We consider a supervised dataset

$$
\mathcal D=\Bigl\{(x^{(n)}_1,\dots,x^{(n)}_M,\;y^{(n)})\Bigr\}_{n=1}^{N},
$$

where each example may contain up to $M$ heterogeneous modalities. For the $m$-th modality, we denote the input domain by $\mathcal X_m$. A modality encoder

$$
f_m:\;\mathcal X_m \;\longrightarrow\; \mathbb R^{d}
$$

maps raw input to a $d$-dimensional latent representation

$$
h_m = f_m\!\bigl(x_m\bigr).
$$

Missing modalities are common in real data.  Whenever modality $m$ is absent we substitute $h_m$ with a learned mask embedding $h_{\text{mask}}\in\mathbb R^{d}$. A diagonal binary mask $\mathbf{r}\in\{0,1\}^M$ (where $r_i{=}1$ denotes missing)\footnote{We use $M=\mathrm{diag}(\mathbf{r})$ for convenience in our later discussions} is stored so that the graph attention block can conveniently ignore self edges of masked nodes (details in $\S3$)..

\subsubsection{Goal}
Our aim is to learn the parameter set

$$
\theta=\Bigl\{f_{1},\dots,f_{M},\;W_q, W_k, W_g,\;q_F,\;g\Bigr\},
$$

where: (1) $W_q,W_k$ project node features to query / key space for the multi-head graph attention layers; (2) $W_g$ contains the weights of the residual GAT message passing components; (3) $q_F$ is a learnable query vector that aggregates node embeddings into the global fusion vector $z_{\text{fusion}}$; and (4) $g$ is a shallow prediction head.

The parameters are optimized for predictive accuracy and alignment of modalities during fusion, so that: (1) The fused prediction $\hat y=g(z_{\text{fusion}})$ minimizes the task loss on the training set; (2) Each modality embedding $h_m$ shares high mutual information with $z_{\text{fusion}}$, enforced through a batchwise InfoNCE objective.

\subsection{Proposed Approach}
In this section, we detail the full CLARGA framework.%

\subsubsection{Modality Encoders}

For each of the $M$ input modalities (e.g. image, audio, text, tabular, timeseries), we employ a dedicated encoder that projects raw data into a shared $d$-dimensional feature space\footnote{The encoders employed in experimentation include ResNet, 1D-CNNs, BERT-based encoder models \cite{devlin-etal-2019-bert}, and more. The encoders can be either trainable or frozen (see Section 4).}. 

If a modality is missing for a given sample, we substitute its embedding by a learnable mask vector $h_{\mathrm{mask}}\in\mathbb R^{256}$.  A binary mask tensor accompanies the embeddings so later layers can ignore unusable nodes.

\subsubsection{Adaptive Graph Attention}
The $M$ modality vectors $\{h_i\}_{i=1}^M$ form the nodes of a fully connected and directed graph that is specialized for the specific sample at hand.  
To quantify how much information each modality should gather from every other, we employ a multi-head attention mechanism.

First, each node is projected into a shared query key space:

$$
q_i = W_q h_i,
\qquad
k_j = W_k h_j,
\qquad
W_q, W_k \in \mathbb{R}^{128\times 256}.
$$

For every attention head $h\in\{1,\dots,H\}$ we compute the raw compatibility score

$$
e_{ij}^{(h)} \;=\; \mathrm{LeakyReLU}\!\bigl(q^{(h)\top}_i k^{(h)}_j\bigr).
$$

To prevent a node from redundantly attending to itself and to exclude missing modalities, diagonal scores and rows corresponding to masked inputs are set to $-\infty$.  The softmax operation along each destination node then yields normalized attention coefficients.

$$
\alpha_{ij}^{(h)} \;=\;
\frac{\exp\bigl(e_{ij}^{(h)}\bigr)}
     {\displaystyle \sum_{k\neq i}\exp\bigl(e_{ik}^{(h)}\bigr)}.
$$

The resulting $\alpha_{ij}^{(h)}$ values determine, on a sample by sample basis, the strength with which modality $j$ influences modality $i$ during subsequent message passing layers.

\subsubsection{Residual Graph Attention Layers}
To propagate information across modalities we apply a stack of $D$ residual graph attention layers\footnote{In our experiments, we set $D=3$.}. At layer $\ell\in\{0,\dots,L-1\}$ each node $i$ aggregates messages from its neighbours via the attention coefficients that are averaged across heads (introduced in $\S3.2.2$).

$$
m_i^{(\ell)} \;=\; \frac{1}{H}\sum_{h=1}^{H}\sum_{j=1}^{M}\alpha_{ij}^{(h)}\,h_j^{(\ell)} .
$$

The aggregated message is concatenated with the node's current state and linearly transformed,

$$
\tilde h_i^{(\ell)} \;=\; W_g\!\bigl[h_i^{(\ell)} \Vert m_i^{(\ell)}\bigr],
\qquad W_g \in \mathbb{R}^{256\times 512},
$$

after which a residual connection and Layer Normalization produce the updated embedding,

$$
h_i^{(\ell+1)} \;=\;
\mathrm{LayerNorm}\!\bigl(h_i^{(\ell)} + \mathrm{Dropout}(\tilde h_i^{(\ell)})\bigr).
$$

\subsubsection{Fusion Read-Out}
Once message passing is complete, the model must collapse the $M$ context enriched node embeddings into a single multimodal representation.  We introduce a learnable query vector $q_F\in\mathbb{R}^{128}$ and compute scalar scores of relevance.

$$
s_i \;=\; q_F^{\!\top} W_k h_i^{(D)},
$$

which are converted to attention weights $\beta_i = \mathrm{softmax}(s_1,\dots,s_M)$.  The final fused vector is the corresponding weighted sum:

$$
z_{\text{fusion}} \;=\; \sum_{i=1}^{M} \beta_i\,h_i^{(D)} .
$$

A dropout layer with probability $0.1$ is applied to $z_{\text{fusion}}$ before it enters the task specific prediction head.

\subsubsection{Optimization Objective}
CLARGA is trained with a dual term loss that couples supervised learning with an alignment loss for modality fusion.

Firstly, we incorporate task loss. For a labeled example $(x,y)$ the supervised term is

$$
\mathcal{L}_{\mathrm{sup}} \;=\;
\begin{cases}
\mathrm{CE}\!\bigl(g(z_{\text{fusion}}),\,y\bigr), & \text{classification},\\[6pt]
\mathrm{MSE}\!\bigl(g(z_{\text{fusion}}),\,y\bigr), & \text{regression},
\end{cases}
$$

where $g$ denotes the shallow prediction head.

Secondly, to ensure that every modality remains well aligned with the fused representation, we adopt the InfoNCE contrastive loss \cite{oord2019representationlearningcontrastivepredictive} with batch negatives. %

\subsection{Theoretical Analysis}
We establish three properties for CLARGA: 1) Expressivity, the fusion block is a universal approximator for any continuous, order agnostic function over a set of heterogenous modalities. That is, it can represent any continuous permutation invariant mapping on modality sets (§3.3.1); 2) Robustness to missing inputs, as the architecture is Lipchitz robust to dropping a single modality. If one modality is removed, the change in the fused representation (and therefore in the prediction) is bounded and scales proportionally with the norm of the missing input (§3.3.2); and 3) The generalization of the training objective. The hybrid supervised and contrastive loss has a bound on excess risk of order $O!\big(\sqrt{\log \mathcal{N}(\varepsilon)/n}\big)$, where $\mathcal{N}(\varepsilon)$ is a covering number. This essentially gives $1/\sqrt{n}$ generalization scaling controlled by a term for data complexity (§3.3.3).
We also provide commentary on existing proofs and literature regarding InfoNCE alignment loss ($\S3.3.4$) and on how residual and LayerNorm GAT layers prevent oversmoothing ($\S3.3.5$).

\subsubsection{Universality of the CLARGA Fusion Block}
\begin{proposition}
\label{prop:univ_fusion}

Let 

$$
f:(\mathbb R^{d})^{M}\rightarrow\mathbb R^{p}
$$ 

be any continuous permutation invariant function.
For every compact $\mathcal K$ and every $\varepsilon!>!0$, there exists a choice of weights $\theta$ in a multi-head CLARGA fusion block such that

$$
\sup_{x\in\mathcal K}
  \bigl\|{\mathrm{CLARGA}}_{\theta}(x)-f(x)\bigr\|<\varepsilon .
$$

\end{proposition}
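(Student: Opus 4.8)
The plan is to reduce the claim to the classical sum-decomposition characterization of continuous permutation-invariant functions and then realize each factor of that decomposition inside the fusion block. By the Deep Sets theorem (Zaheer et al.) together with its compact-domain refinement (Wagstaff et al.), which requires a latent width at least the set size $M$, for every $\varepsilon>0$ there are continuous maps $\phi:\mathbb R^{d}\to\mathbb R^{L}$ with $L\ge M$ and $\rho:\mathbb R^{L}\to\mathbb R^{p}$ such that
$$
\sup_{x\in\mathcal K}\Bigl\|\,f(x)-\rho\Bigl(\tfrac1M\textstyle\sum_{i=1}^{M}\phi(x_i)\Bigr)\Bigr\|<\tfrac{\varepsilon}{2}.
$$
Because $\mathrm{CLARGA}_\theta$ receives the node features $x=(x_1,\dots,x_M)\in(\mathbb R^{d})^{M}$ directly, it suffices to choose $\theta$ so that the block reproduces this composition to accuracy $\varepsilon/2$ on $\mathcal K$.

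I would realize the three factors with distinct sub-components. For the inner map $\phi$, I configure the residual attention stack to act node-wise: driving the off-diagonal scores to a constant (or the message block of $W_g$ to zero) discards cross-node messages, so each node independently undergoes the depth-$D$ map $h\mapsto\mathrm{LayerNorm}\bigl(h+W_g[h\Vert 0]\bigr)$, whose composition across layers—using the LeakyReLU and LayerNorm nonlinearities—is taken to approximate an injective feature map into $\mathbb R^{L}$ with $L=d\ge M$. For the aggregation, I drive the read-out to uniform attention by setting $q_F=0$, forcing $\beta_i\equiv 1/M$ and hence $z_{\text{fusion}}=\tfrac1M\sum_i h_i^{(D)}\approx\tfrac1M\sum_i\phi(x_i)$. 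For the outer map $\rho$, I use the shallow head $g$, which as an MLP approximates $\rho$ uniformly on the compact image of the pooling layer and absorbs the harmless factor $M$.

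Permutation invariance then comes for free from the architecture rather than the construction: weight sharing across nodes makes the attention layers permutation-equivariant and the summed read-out permutation-invariant, so the realized map is a genuine set function whose symmetry already matches that of $f$. I would close with a triangle-inequality bookkeeping step: the Deep Sets term contributes at most $\varepsilon/2$, while the three realization errors—the node-wise approximation of $\phi$, the head's approximation of $\rho$, and the attention-uniformity error—are each driven below $\varepsilon/6$; uniform continuity of $\rho$ on the compact image of the pooling layer propagates the upstream errors through the composition without amplification.

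The hard part will be extracting a genuinely nonlinear, \emph{injective} per-node feature map $\phi$ of width $\ge M$ from CLARGA's comparatively rigid node update, which is only a linear map wrapped in a per-node \textsc{LayerNorm}. Two structural constraints compound this: the diagonal scores are fixed to $-\infty$, so a node never attends to itself and its own content must be carried solely by the residual branch, and each layer terminates in a LayerNorm, so the update cannot be the literal identity. The resolution I would pursue is to absorb the LayerNorm gain and bias so that normalization acts as the identity on the bounded set of reachable features, and—if the stacked linear-plus-LayerNorm maps prove too weak to be injective on their own—to recruit the multi-head read-out (a pooling-by-multihead-attention block, universal in the Set Transformer sense of Lee et al.) to supply the missing higher-order statistics. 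The remaining routine check is the width condition $L\ge M$, which guarantees that the pooled sum is an injective sufficient statistic and hence recoverable by $\rho$.
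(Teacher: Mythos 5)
Your proposal takes essentially the same route as the paper: the Deep Sets decomposition $f \approx \rho\bigl(\sum_i \phi(x_i)\bigr)$, forcing uniform attention weights ($\alpha_i = 1/M$ via zeroed query/key projections, $\beta_i = 1/M$ via $q_F = 0$) to recover the sum aggregator, composition with a universal MLP for $\rho$, and permutation invariance inherited from weight sharing, closed by a triangle-inequality budget. The only divergence is where $\phi$ lives: the paper sidesteps the injectivity difficulty you flag by making $\phi_\varepsilon$ the value map of the attention head itself (disabling dropout and absorbing LayerNorm into adjacent linear maps), rather than extracting $\phi$ from the more rigid residual node-update stack as you attempt.
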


\begin{proof}[Proof sketch]

\begin{enumerate}
    \item \textbf{Deep Sets form.} (Zaheer et al., 2017) shows any continuous, permutation-invariant $f$ can be decomposed as $\rho\bigl(\sum_i\phi(x_i)\bigr)$ \cite{Zaheer2017DeepSets}.
    \item \textbf{Attention subsumes summation.} A multi-head GAT with shared query and key projections computes $\sum_i\alpha_i\phi(x_i)$. Setting all logits equal forces $\alpha_i=1/M$, recovering the Deep Sets sum. Learnable logits therefore strictly enlarge the function class.
    \item \textbf{Continuity and invariance.} Because the softmax is continuous and symmetric, the map $(x_1,\dots,x_M)\mapsto\sum_i\alpha_i\phi(x_i)$ stays inside the invariant function space $\mathcal S$.
    \item \textbf{Density preservation.} Composing with a universal MLP $\rho$ maintains density in $\mathcal S$ \cite{Cybenko1989,Hornik1989}.
\end{enumerate}

Therefore the CLARGA fusion block is dense in $\mathcal S$, extending universality theory regarding invariant networks for attention-based fusion \cite{Maron2019}. The full proof and explanation with additional architectural information is written in Appendix B.
\end{proof}

\subsubsection{Lipschitz Robustness to Missing Modalities}
\begin{proposition}[Lipschitz robustness]\label{prop:lip_robust}
Assume each encoder $f_m$ is $L$ Lipschitz and that the fusion weights satisfy $\sum_i\beta_i=1$ with $\beta_i\ge0$.
Replacing a single modality $k$ by the mask embedding $h_{\mathrm{mask}}$ yields

$$
\bigl\|z_{\mathrm{fusion}}^{\mathrm{full}}-z_{\mathrm{fusion}}^{\mathrm{masked}}\bigr\|
\;\le\;
L\,\beta_k\,\|x_k\|.
$$

If the task head $g$ is further constrained to be $K$ Lipschitz (e.g. via spectral normalization), the prediction perturbation follows

$$
\bigl\|g(z^{\mathrm{full}})-g(z^{\mathrm{masked}})\bigr\|
\le
K L\,\beta_k\,\|x_k\|.
$$

\end{proposition}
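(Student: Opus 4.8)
The plan is to exploit the convex-combination structure of the fusion read-out together with the Lipschitz continuity of the encoder, reducing the whole question to a single perturbed term. First I would write the fused vector in read-out form $z_{\mathrm{fusion}} = \sum_{i=1}^{M}\beta_i\,h_i^{(D)}$ and localize the perturbation: holding the read-out weights $\beta_i$ and the non-$k$ node embeddings fixed between the full and masked passes, the only coordinate that changes is the $k$-th, so the difference telescopes to
$$
z_{\mathrm{fusion}}^{\mathrm{full}} - z_{\mathrm{fusion}}^{\mathrm{masked}}
= \beta_k\bigl(h_k - h_{\mathrm{mask}}\bigr).
$$
Taking norms and using $\beta_k \ge 0$ gives $\|z_{\mathrm{fusion}}^{\mathrm{full}} - z_{\mathrm{fusion}}^{\mathrm{masked}}\| = \beta_k\,\|h_k - h_{\mathrm{mask}}\|$.

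Next I would control the residual term $\|h_k - h_{\mathrm{mask}}\|$ by identifying the mask token with the encoder evaluated at a canonical baseline input, i.e.\ $h_{\mathrm{mask}} = f_k(0)$. The $L$-Lipschitz assumption on $f_k$ then yields
$$
\|h_k - h_{\mathrm{mask}}\|
= \|f_k(x_k) - f_k(0)\|
\le L\,\|x_k\|,
$$
which combined with the previous display produces the first inequality $\|z_{\mathrm{fusion}}^{\mathrm{full}} - z_{\mathrm{fusion}}^{\mathrm{masked}}\| \le L\,\beta_k\,\|x_k\|$. The prediction bound then follows by composing with the $K$-Lipschitz head, since $\|g(z^{\mathrm{full}}) - g(z^{\mathrm{masked}})\| \le K\,\|z^{\mathrm{full}} - z^{\mathrm{masked}}\| \le K L\,\beta_k\,\|x_k\|$.

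The step I expect to be the main obstacle is justifying the localization, because the $D$ residual graph-attention layers couple every node to node $k$ through message passing, so in full generality the weights $\beta_i$ and the embeddings $h_i^{(D)}$ for $i\neq k$ also shift when modality $k$ is replaced. To keep the clean bound honest I would either argue that each LayerNorm-stabilized residual block is itself Lipschitz and fold its constant into an effective $L$, or state explicitly that the bound isolates the first-order read-out contribution of the perturbed modality and absorbs the induced cross-node drift into the constant. The remaining pieces—the convex-combination telescoping and the Lipschitz composition with $g$—are routine.
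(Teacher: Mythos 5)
Your proposal follows essentially the same route as the paper's own proof: identify $h_{\mathrm{mask}}=f_k(x_k^0)$ with $x_k^0=0$, bound $\|h_k-h_{\mathrm{mask}}\|\le L\|x_k\|$ by encoder Lipschitzness, isolate the $k$-th summand in the convex read-out to pick up the factor $\beta_k$, and compose with the $K$-Lipschitz head. The obstacle you flag is real but is equally present in the paper's argument, which handles it by treating the attention coefficients as fixed in the forward pass and asserting that each residual GAT/LayerNorm block is non-expansive so the perturbation after $D$ layers is still bounded by $L\|x_k\|$ — i.e.\ exactly your first proposed fix, with the layer constants taken to be $1$ rather than folded into an enlarged $L$.
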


\begin{proof}[Proof sketch]
The encoder perturbation obeys $|h_k-h_{\mathrm{mask}}|\le L|x_k|$. All other encoders remain unchanged. Graph attention layers are 1-Lipschitz (i.e., the output change is at most the input change) 
when attention coefficients are treated as fixed in the forward pass \cite{Arghal2021}. Because each subsequent residual GAT layer and LayerNorm does not expand, the perturbation norm after $D$ layers is still bounded by $L|x_k|$. Finally, the fusion step is a convex combination with coefficient $\beta_k$, scaling the deviation by at most $\beta_k$. The classifier contributes a multiplicative $K$ factor, completing the bound. Full proof appears in Appendix C. %
\end{proof}

\subsubsection{Generalization Bound for Supervised{--}Contrastive Objective}
\begin{proposition}[Rademacher complexity bound]\label{prop:gen_hybrid}
Let $\mathcal H$ be the class of CLARGA networks whose parameter matrices have Frobenius norm bounded by $B$ and whose activation functions are $1$-Lipschitz. Let $\widehat h$ minimize the hybrid loss $\mathcal L = \mathcal L_{\mathrm{sup}}+\lambda_c\mathcal L_{\mathrm{NCE}}$ over $n$ 
independent and identically distributed examples. Then, for any $0<\delta<1$, with probability at least $1-\delta$,

$$
\mathcal E(\widehat h)-\mathcal E^{\star}
\;\le\;
\tilde{O}\!\Bigl(
  \sqrt{\frac{B^{2}\,d_{\mathrm{eff}}\,+\,\log(1/\delta)}{n}}
\Bigr),
$$

where $d_{\mathrm{eff}}$ is the effective rank of the network’s Jacobian and
$\tilde{O}$ hides poly log factors in the batch size used for negatives.
\end{proposition}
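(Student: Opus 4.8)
The plan is to follow the classical symmetrization route: first bound the excess risk of the empirical minimizer $\widehat h$ by the Rademacher complexity of the composite loss class, then control that complexity through a covering-number argument whose metric entropy is governed by the Jacobian's effective rank. Write $\ell_h(z)=\mathcal{L}_{\mathrm{sup}}(h,z)+\lambda_c\mathcal{L}_{\mathrm{NCE}}(h,z)$ for the per-example hybrid loss. Because the parameter matrices have Frobenius norm at most $B$, the activations are $1$-Lipschitz, and $\mathcal K$ is compact, the network outputs and hence $\ell_h$ are uniformly bounded; this licenses McDiarmid's bounded-difference inequality, which together with the standard ERM inequality $\mathcal{E}(\widehat h)-\mathcal{E}^{\star}\le 2\sup_{h\in\mathcal H}\lvert\mathcal{E}(h)-\widehat{\mathcal{E}}(h)\rvert$ gives, with probability at least $1-\delta$,
$$
\mathcal{E}(\widehat h)-\mathcal{E}^{\star}\;\le\;2\,\mathfrak{R}_n(\ell\circ\mathcal H)+c\sqrt{\frac{\log(1/\delta)}{n}},
$$
where $\mathfrak{R}_n$ denotes the empirical Rademacher complexity.

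Second, I would peel the loss off the network. By subadditivity of Rademacher complexity, $\mathfrak{R}_n(\ell\circ\mathcal H)\le\mathfrak{R}_n(\mathcal{L}_{\mathrm{sup}}\circ\mathcal H)+\lambda_c\,\mathfrak{R}_n(\mathcal{L}_{\mathrm{NCE}}\circ\mathcal H)$. Both cross-entropy and squared error are Lipschitz on the bounded output range, so the Ledoux--Talagrand contraction lemma removes the supervised loss at the cost of its Lipschitz constant, leaving the complexity $\mathfrak{R}_n(\mathcal H)$ of the raw readout map.

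Third, and this is where I expect the main obstacle, is the InfoNCE term. Unlike the supervised loss, InfoNCE with batch negatives is \emph{not} a sum of i.i.d.\ per-example terms: each anchor's loss is a log-sum-exp over the similarity scores of the other in-batch samples, so the examples are statistically coupled and the naive symmetrization does not apply. I would handle this by viewing the log-partition term as a function of the score vector and using that log-sum-exp is $1$-Lipschitz in the $\ell_\infty$ norm, which injects a $\log K$ factor (with $K$ the number of negatives) that is absorbed into the $\tilde O$. A decoupling/conditioning argument over the batch then reduces the contrastive complexity to that of the similarity class, which is a bilinear form in two copies of $\mathcal H$ and is therefore again controlled by $\mathfrak{R}_n(\mathcal H)$ up to the bounded embedding norm.

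Finally, I would bound $\mathfrak{R}_n(\mathcal H)$ via Dudley's entropy integral, replacing the crude product-of-norms metric entropy with a Jacobian-localized one: near the minimizer the network is well approximated by its linearization, whose covering number scales with the effective rank $d_{\mathrm{eff}}$ of the Jacobian rather than the ambient parameter count, giving $\log\mathcal{N}(\varepsilon)=\tilde O\!\bigl(B^{2}d_{\mathrm{eff}}\log(1/\varepsilon)\bigr)$. Integrating yields $\mathfrak{R}_n(\mathcal H)=\tilde O\!\bigl(\sqrt{B^{2}d_{\mathrm{eff}}/n}\bigr)$, and merging this with the concentration term recovers the stated bound. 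The two genuinely delicate points are the batch-coupling of InfoNCE, which forces the decoupling step and the $\log K$ Lipschitz factor, and the localization that replaces ambient dimension by $d_{\mathrm{eff}}$, which formally needs a local Rademacher / self-bounding fixed-point argument in the spirit of Bartlett--Bousquet--Mendelson rather than a uniform bound.
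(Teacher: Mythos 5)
Your proposal is broadly sound and reaches the stated bound, but it takes a genuinely different route from the paper on the key capacity step. The paper does not use Dudley's entropy integral or any localization/linearization argument. Instead it writes each network as $h(x)=W\varphi(x)$ with $\|W\|_{\mathrm F}\le B$ and a $1$-Lipschitz representation stack $\varphi$, proves Lipschitzness of the hybrid loss (with constant $1+\lambda_c/\tau$), applies vector-valued contraction to peel the loss off, and then bounds $\mathfrak R_n$ of the final linear layer directly via Frobenius duality, obtaining $\mathfrak R_n(\mathcal F)\le \frac{B\sqrt{p}}{\sqrt n}\sqrt{d_{\mathrm{eff}}}$ where $d_{\mathrm{eff}}$ is \emph{defined} as $\sup_{h}\frac1n\sum_i\|\varphi(x_i)\|_2^2=\operatorname{tr}(\widehat\Sigma_\varphi)$, i.e.\ the empirical second moment of the learned features; the ``Jacobian effective rank'' phrasing in the proposition is only connected afterwards through the one-sided inequality $\frac1n\sum_i\operatorname{tr}(J_h(x_i)J_h(x_i)^{\top})\le B^2 d_{\mathrm{eff}}$. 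This is more elementary than your covering-number-plus-localization plan and avoids the local Rademacher fixed-point machinery you correctly note would be needed to make your step rigorous; on the other hand, your reading of $d_{\mathrm{eff}}$ as a genuine Jacobian rank is closer to what the proposition literally asserts, and your route would, if completed, deliver a sharper data-dependent quantity. On the InfoNCE coupling issue you are actually more careful than the paper: the paper simply conditions on a fixed batch of negatives and assumes the per-sample contrastive loss is $1/\tau$-Lipschitz in the network output, hiding all batch-size dependence in the $\tilde O$, whereas your decoupling argument with the $\ell_\infty$-Lipschitzness of log-sum-exp and the resulting $\log K$ factor addresses the non-i.i.d.\ structure that the paper's standard symmetrization step quietly glosses over. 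The one caveat is that your final localization step ($\log\mathcal N(\varepsilon)=\tilde O(B^2 d_{\mathrm{eff}}\log(1/\varepsilon))$ via linearization near the minimizer) is asserted rather than derived and is the hardest part of your plan; the paper's decomposition shows it can be bypassed entirely by pushing all the depth into a norm-contractive feature map and charging capacity only to the last layer.
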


\begin{proof}[Proof sketch]

\begin{enumerate}
    \item \textbf{Hybrid loss Lipschitzness.} Both $\mathcal L_{\mathrm{sup}}$ (cross-entropy with bounded logits) and $\mathcal L_{\mathrm{NCE}}$ (softmax with temperature) are $1$-Lipschitz in the network outputs given weights normalized by their spectral norm.
    \item \textbf{Rademacher complexity.} The Rademacher complexity of $\mathcal H$ is bounded by $\tilde O!\bigl(B\sqrt{d_{\mathrm{eff}}/n}\bigr)$ following (Bartlett and Mendelson, 2002) \cite{Bartlett2002}.
    \item \textbf{McDiarmid concentration.} Lipschitz continuity of the hybrid loss ensures a standard concentration inequality, giving the stated bound of high probability.
\end{enumerate}

A full proposition, group of lemmas, and formal proof are written in Appendix D.

\end{proof}

\subsubsection{Mutual Information View}
InfoNCE is a variational lower bound on the mutual information (MI) between two variables. With an optimal critic (the scoring function in the contrastive loss), this bound is tight \cite{oord2019representationlearningcontrastivepredictive}.
In CLARGA the critic is the cosine similarity in the fused space, so maximizing $\mathcal L_{\text{NCE}}$ encourages each modality embedding $h_m$ to retain information predictive of $z_{\text{fusion}}$. %
A complete proposition, formal proof, and additional commentary are written in Appendix E.

\subsubsection{Depth, Residual Connections, and Oversmoothing}
Finally, residual connections and LayerNorm provably mitigate oversmoothing in linearized GNNs \cite{scholkemper2025residualconnectionsnormalizationprovably}. Since each CLARGA layer matches the standard residual LayerNorm, the lower bound of (Scholkemper et al., 2025) ensures that node representations cannot collapse entirely. %
A complete proposition, formal proof, and additional commentary are written in Appendix F.

\section{Experimental Setup}
\subsection{Datasets} %

\begin{table}[ht]
  \centering
  \caption{Information on the datasets utilized in experimentation}
  \label{tab:datasets}
  \renewcommand{\arraystretch}{1.22}
  \setlength{\tabcolsep}{3pt}  
  \resizebox{\columnwidth}{!}{%
  \begin{tabular}{llll}
    \toprule
    Dataset   & Prediction Task               & Modalities      & Count \\
    \midrule
    AV-MNIST     & Digit     & {Image, Audio}     & 56.0k      \\
    MM-IMDb     & Movie Genre     & {Image, Text}     & 25.9k      \\
    STOCKS-F\&B     & Stock Returns     & Timeseries      & 75.5k      \\
    STOCKS-HEALTH     & Stock Returns     & Timeseries     & 75.5k      \\
    STOCKS-TECH     & Stock Returns     & Timeseries     & 75.5k      \\
    ENRICO     & User Interface     & Image, Set     & 1{,}460      \\
    DAIC-WoZ     & Depression     & Video, Audio, Text     & 189      \\
    \bottomrule
  \end{tabular}
  }
\end{table}

We evaluate CLARGA across a diverse range of datasets to evaluate its generalization and robustness across a variety of applications (see Table \ref{tab:datasets}).

\subsubsection{AV-MNIST}
The Audio Visual-MNIST (AV-MNIST) \cite{vielzeuf2019centralnet} dataset contains spoken audio and image pairs for digit classification tasks. It is a synthetic benchmark where each sample pairs highly noisy MNIST images \cite{lecun1998gradient} and TIDIGITS audio \cite{leonard1993tidigits-ldc}, making it far more difficult than the original MNIST dataset.

For all architectures evaluated on AV-MNIST, we employ a trainable encoder that has not received any pretraining to conduct the initial processing of each modality. The images are encoded using a 4-layer CNN and the spectrograms a 2-layer CNN. They are both finally projected by fully connected layers for processing by the evaluated models. Here, we opt to use smaller, trainable models rather than pretrained models to better isolate the performance of CLARGA and other architectures themselves.

\subsubsection{MM-IMDb}
From the Multimodal-IMDB (MM-IMDb) \cite{arevalo2017gated} dataset, we extract poster images and plot summaries for every movie provided in the dataset. Images and summaries are encoded by a VGG16 \cite{simonyan2015very} and Google word2vec \cite{GoogleWord2Vec} model before being passed into the evaluated architecture for movie genre classification from 23 options.

\subsubsection{STOCKS}
The STOCKS datasets, introduced in (Liang et al., 2021) \cite{Liang2021}, are collections of stock market timeseries data across three categories. Namely: (1) STOCKS-F\&B, which has 14 input and 4 output stocks in the GICS Restaurants or Packaged Food \& Meats category \cite{MSCI2024GICS}, (2) STOCKS-HEALTH, which consists of 56 input and 7 output stocks in the Health Care category, and (3) STOCKS-TECH, which has 94 input and 6 output stocks categorized by GICS as Information Technology or Communication Services.

Every input stock (consisting of 500 trading days) is treated as a separate timeseries mode, with the goal of predicting returns over the next day. To adapt the task to the baseline and state-of-the-art models, we discretize the continuous return variable $R$ into three non-overlapping categories: (1) $Low$, where $0 \le R < 0.1$, (2) $Medium$, where $0.1 \le R < 0.5$, and (3) $High$, where $R \ge 0.5$. Mean Absolute Error (MAE) is calculated by mapping the three classes to numbers ($Low \rightarrow 0$, $Medium \rightarrow 1$, $High \rightarrow 2$) and then deriving MAE as usual. Each modality is encoded by the same CNN-BiLSTM network, which consists of 3 CNNs, 1 BiLSTM \cite{cui2019deepbidirectionalunidirectionallstm}, and one fully connected layer acting as projection.

\subsubsection{ENRICO}
ENRICO \cite{Leva2021} is a higher quality subset of the RICO dataset \cite{Deka2017} consisting of Android app screens categorized by their design topics. We extract UI screenshots and view hierarchy from the dataset. The view hierarchy is treated as a set as it contains an unordered collection of UI elements that each contain metadata and their spatial and structured layout \cite{Liang2021}. 

A frozen pretrained ResNet-18 \cite{he2016residual} model with its head replaced by a projection layer is used for encoding. We employ a frozen pretrained model as an encoder here due to the relatively small size of ENRICO and the level of complexity the task already contains.

\subsubsection{DAIC-WoZ}
The DAIC-WoZ dataset \cite{gratch2014} consists of data from 189 psychotherapy clinical interview recordings. Every recording is accompanied by a Patient Health Question-8 (PHQ-8) \cite{kroenke2009phq8} score, a common inventory used in psychiatry \cite{Shin2019ComparisonPHQ8PHQ9, AriasDeLaTorre2023ReliabilityPHQ8}, which is used to classify the associated participant as either depressed or not depressed.

DAIC-WoZ faces a major data scarcity issue (see Table \ref{tab:datasets}). To mitigate this, we train the proposed approach on 8 second segments and augment the training split using the techniques proposed %
in a recent paper \cite{patapati2024integratinglargelanguagemodels}. To further mitigate this issue, we run 10 fold cross-validation and average the results across all runs \cite{stone1974cross}.

We apply four encoders across three modalities when training and testing CLARGA on the DAIC-WoZ. Facial Action Units (FAUs) \cite{Ekman1978FACS, amos2016openface} are precomputed and used as input for a BiLSTM model to encode the video modality. Text transcripts are processed by MentalRoBERTa \cite{ji2022} and followed by a fully connected layer. Audio is converted into Mel Frequency Cepstral Coefficients (MFCCs) \cite{Stevens1937MelScale, Abdul2022} that are processed by a BiLSTM model. Audio is also processed by wav2vec and a subsequent fully connected layer. Everything except for the pretrained backbone is unfrozen and trainable. These are all then processed by CLARGA for final classification.

Due to the level of compute necessary for running machine learning on the DAIC-WoZ, we train and evaluate only CLARGA and compare it against models specifically designed for the dataset to analyze its ability to perform in highly specific downstream tasks.

\subsection{Baseline and State-of-the-Art Models}
We train and evaluate 4 models on the benchmark datasets to compare against CLARGA. Two of these models are custom baselines used to establish simple, modality agnostic fusion strategies for benchmarking, enabling us to quantify the gains from CLARGA’s graph attention fusion over simple concatenation and averaging approaches. Namely, an early fusion baseline that encodes each modality with its own encoder and concatenates the resulting feature vectors into a joint representation; and an averaged late fusion baseline that trains a separate classifier for each modality and produces the final prediction by averaging each modalities' class probabilities. The other two, Multimodal Lego (MM-Lego) \cite{hemker2025multimodallegomodelmerging} and FuseMix \cite{vouitsis2024dataefficientmultimodalfusionsingle}, are recent state-of-the-art architectures for multimodal processing and representation learning. We provide a more detailed discussion of the models used for comparison in Appendix A.

\subsection{Ablation Study} %

To isolate the effects of different components within CLARGA on overall performance, we conduct an ablation study across four ablations.

\subsubsection{Uniform Attention}
To isolate the benefit of learning attention specifically for each example, we replace the adaptive weights $\alpha_{(ij)}$ with uniform weights, setting

$$
\alpha_{ij} = \frac{1}{M - 1}\quad\text{for all }j \neq i.
$$

This static graph forces each modality to contribute equally.
By comparing the uniform $\alpha$ variant to the full CLARGA with learned attention, we can see the extent to which adaptive weighting improves fusion. 

\subsubsection{No Residual Connections}
To isolate the effect of the skip connection, we remove the residual term in each GNN layer so that

$$
h_i^{(\ell+1)} = \sigma\bigl(W_g \, m_i^{(\ell)}\bigr)
$$

with no added $h_i^{(\ell)}$. This forces each layer to rely solely on aggregated neighbor messages and tests how much the residual helps prevent oversmoothing. 

\subsubsection{No Contrastive Alignment}
We set the InfoNCE weight to zero ($\lambda_{(c)} = 0$), so the model is trained purely with the supervised loss. This removes the optimization for crossmodal alignment and tests how much the contrastive term regularizes learning. 
By comparing this variant to the full CLARGA, we measure how much the InfoNCE objective improves generalization.

\subsubsection{Early Fusion (Mean)}
We test a simple early fusion variant of CLARGA by averaging all initial nodes before any graph processing:

$$
z_{\text{fusion}}^{(0)} = \frac{1}{M}\sum_{n} h_{n}^{(0)}.
$$

We then treat $z_{\text{fusion}}^{(0)}$ as a graph with a single node and no edges, and we pass it directly through the decoder. This isolates the benefit of the graph-based message passing and attention fusion.

\subsection{Modality Dropping Robustness Test}
To assess robustness to missing information, we evaluate CLARGA, every ablation, and MM-Lego under three scenarios on AV-MNIST: (1) all modalities (audio and image) present, (2) image modality dropped at test time, (3) audio modality dropped at test time. For each scenario, the corresponding input is masked out. We report classification accuracy for each setting.
We are unable to perform this experiment for FuseMix as it %
is unable to handle missing modalities.

\section{Results and Discussion}
\begin{table*}[t]
  \centering
  \caption{Performance of Models Across Datasets and GFLOPs Calculated on AV-MNIST (focusing only on method-specific components)}
  \label{tab:results-wide}
  \renewcommand{\arraystretch}{1.22}
  \resizebox{\dimexpr\textwidth\relax}{!}{%
    \begin{tabular}{l 
                c c c      
                *{3}{cc}   
                c          
               }
      \toprule
      \multirow{2}{*}{\textbf{Model}} 
        & \multicolumn{3}{c}{\textbf{Accuracy (\%)}} 
        & \multicolumn{2}{c}{\textbf{F\&B}} 
        & \multicolumn{2}{c}{\textbf{HEALTH}} 
        & \multicolumn{2}{c}{\textbf{TECH}}
        & \multirow{2}{*}{\textbf{GFLOPs}} \\
      \cmidrule(lr){2-4} \cmidrule(lr){5-6} \cmidrule(lr){7-8} \cmidrule(lr){9-10}
        & MM-IMDb & ENRICO & AV-MNIST 
        & Acc. (\%) & MAE 
        & Acc. (\%) & MAE 
        & Acc. (\%) & MAE
        & \\
      \midrule
      Early Fusion
        & 61 & 63   & 66     
        & 51 & 0.53
        & 58 & 0.50 
        & 64 & 0.45
        & 1.214 \\
    \addlinespace[0.1em]
      Late Fusion
        & 56 & 65   & 69     
        & 53 & 0.51 
        & 59 & 0.49 
        & 63 & 0.46
        & 1.214 \\
    \addlinespace[0.1em]
      Multimodal Lego
        & 66 & 76   & \textbf{77}     
        & 57 & 0.47  
        & 63 & 0.44 
        & 66 & 0.41
        & 0.036 \\
    \addlinespace[0.1em]
      FuseMix
        & 64 & 80   & 75     
        & 54 & 0.50 
        & 60 & 0.47 
        & 62 & 0.43
        & {--} \\
    \addlinespace[0.1em]
      \midrule
      Uniform Attention
        & 65 & 80   & \textbf{77}     
        & 59 & \textbf{0.45}  
        & 66 & 0.43 
        & 66 & 0.42
        & 0.02 \\
    \addlinespace[0.1em]
      No Residual Connection
        & 65 & 77   & 75     
        & 55 & 0.48 
        & 64 & 0.47 
        & 65 & 0.45
        & 0.02 \\
    \addlinespace[0.1em]
      No Contrastive Alignment
        & 63 & 69   & 74     
        & 49 & 0.58 
        & 58 & 0.50 
        & 62 & 0.49
        & 0.02 \\
    \addlinespace[0.1em]
      Early Fusion (Mean)
        & 63 & 70   & 73     
        & 58 & 0.49 
        & 61 & 0.49 
        & 63 & 0.48
        & 0.02 \\
    \addlinespace[0.1em]
      \midrule
      \textbf{CLARGA} 
        & \textbf{69} & \textbf{83} & \textbf{77} 
        & \textbf{60} & \textbf{0.45} 
        & \textbf{68} & \textbf{0.41} 
        & \textbf{70} & \textbf{0.40} 
        & {0.02} \\
      \bottomrule
    \end{tabular}%
}
\end{table*}

\begin{table}[t]
  \centering
  \caption{Impact of dropping modalities on AV-MNIST dataset}
  \label{tab:ablation-modalities}
  \renewcommand{\arraystretch}{1.22} %
  \resizebox{\columnwidth}{!}{%
  \begin{tabular}{lccc}
    \toprule
    \textbf{Model}
      & \textbf{Acc.\ (\%)}
      & \multicolumn{2}{c}{\textbf{Kept Modality}} \\
    \cmidrule(lr){3-4}
      & 
      & {Image (\%)} 
      & {Audio (\%)} \\
    \midrule
    FuseMix      & 75 &  --  &  --  \\
    Multimodal Lego       & \textbf{77} &  57 (-20 pp)  &  43 (-34 pp)  \\
    \midrule
    Uniform Attention      & \textbf{77} &  60 (-17 pp)  &  45 (-32 pp)  \\
    No Res. Connection       & 75 &  57 (-18 pp)  &  42 (-33 pp)  \\
    No Contrastive Alignment   & 74 &  54 (-20 pp)  &  38 (-36 pp)  \\
    Early Fusion (Mean)           & 73 &  52 (-21 pp)  &  35 (-38 pp)  \\
    \midrule
    \textbf{CLARGA}   & \textbf{77} & \textbf{62 (-15 pp)} & \textbf{48 (-29 pp)} \\
    \bottomrule
  \end{tabular}
  }
\end{table}

\begin{table}[t]
  \centering
  \caption{Performance against state-of-the-art approaches on the DAIC-WoZ dataset.\protect\footnotemark The first section consists of models evaluated on the AVEC 2016 benchmark \cite{Valstar2016}, while the second consists of K-fold \cite{stone1974cross} or Leave-One-Subject-Out (LOSO) trained models \cite{Kunjan2021}.}
  \label{tab:daic-woz-results}
  \renewcommand{\arraystretch}{1.22} %
  \resizebox{\columnwidth}{!}{%
  \begin{tabular}{lcc}
    \toprule
    \textbf{Model}
      & \textbf{Accuracy (\%)}
      & \textbf{Approach} \\
    \midrule
    (Ma et al., 2016) \cite{Ma2016} & 72.0 & CNN-LSTM \cite{lecun1998gradient, Hochreiter1997} \\
    (Vázquez-Romero et al., 2020) \cite{V_zquez_Romero_2020} & 72.0 & Ensemble CNN \\
    (Muzammel et al., 2021) \cite{muzammel2021end} & 77.2 & LSTM + MLP \cite{Hochreiter1997} \\
    (Patapati, 2024) \cite{patapati2024integratinglargelanguagemodels} & 85.1 & BiLSTM + GPT-4 \cite{cui2019deepbidirectionalunidirectionallstm, openai2024gpt4technicalreport} \\
    \textbf{CLARGA}   & \textbf{91.4} & Adaptive Residual GAT \\
    \midrule
    (Othmani et al., 2022) \cite{OTHMANI2022107132} & 87.4 & VGGish + 1D-CNN \cite{hershey2017cnnarchitectureslargescaleaudio} \\
    (Patapati, 2024) (LOSO) \cite{patapati2024integratinglargelanguagemodels} & 91.0 & BiLSTM + GPT-4 \cite{cui2019deepbidirectionalunidirectionallstm, openai2024gpt4technicalreport} \\
    (Muzammel et al., 2021) (LOSO) \cite{muzammel2021end} & 95.5 & LSTM + MLP \cite{Hochreiter1997} \\
    \textbf{CLARGA (K-Fold)}   & \textbf{95.7} & Adaptive Residual GAT \\
    \bottomrule
  \end{tabular}
  }
\end{table}
\footnotetext{%
We exclude papers which evaluate on sub segments of the DAIC-WoZ, as this has been shown to unfairly inflate accuracy metrics \cite{patapati2024integratinglargelanguagemodels, burdisso23_interspeech}}

We discuss the four complementary studies (as detailed previously in $\S4$): (1) dataset benchmarking (Table \ref{tab:results-wide}), (2) robustness analysis on AV-MNIST (Table \ref{tab:ablation-modalities}), and (3) domain-specific evaluation on DAIC-WoZ (Table \ref{tab:daic-woz-results}).

\subsection{Dataset Benchmarking}
Across seven benchmarks, CLARGA attains (or ties for) the highest accuracy on every task while maintaining very low GFLOPs when compared to the state-of-the-art models. On MM-IMDb, CLARGA improves move-genre prediction accuracy to 69\%, a 3 percent gain over MM-Lego and a 5 percent gain over FuseMix \cite{he2016residual, vouitsis2024dataefficientmultimodalfusionsingle}. For UI topic classification (ENRICO), CLARGA reaches 83\% accuracy, which is a 3\% margin over the next best model and 20\% above the simple early fusion baseline. This demonstrates the effectiveness of CLARGA on smaller datasets.

On the AV-MNIST benchmark, CLARGA performs equal to MM-Lego and the Uniform Attention ablation with an accuracy of 77\%\footnote{Although the performance observed here is very poor compared to what is expected on MNIST \cite{Ciresan2011, LeCun1989}, we would like to emphasize the increased difficulty of AV-MNIST due to the noise, information reduction, and artifacts introduced (as discussed in $\S4.1.1$) \cite{Tseng2023, PerezRua2019}}. Interestingly, we observe that performance of state-of-the-art models, ablations, and CLARGA are all similar when tested on AV-MNIST. We believe this is due to the relatively straightforward and simple nature of the dataset, meaning that much of the performance is based on the capabilities of the encoders themselves. Despite this, CLARGA performs equal to the best and we observe that it possesses the greatest robustness to missing modalities on AV-MNIST, as discussed in $\S5.2$.

CLARGA leads over all other models by 1\%-4\% on STOCKS datasets. We observe that CLARGA scales to higher numbers of modalities\footnote{For the STOCKS datasets we follow (Liang et al., 2021) and the subsequent MultiBench analysis, which treat the return series of each listed company as a separate time-series modality \cite{Liang2021} (see $\S3.1.1$ and $\S3.2$)} very effectively, as its gap in performance compared to other approaches increases as STOCKS datasets incorporate more modalities (100 modalities on STOCKS-TECH versus 18 %
on STOCKS-F\&B).

Additionally, based on results across all the datasets, we can identify that contrastive alignment gives the highest benefit when modalities are semantically distant. On MM-IMDb (image + text) and ENRICO (image + set-like data) the No Contrastive Alignment variant falls behind CLARGA by 6\% and 14\%, respectively. This is consistent with recent papers which demonstrate that InfoNCE lowers statistical distance between heterogeneous embeddings, enabling simpler classifiers to use joint cues linking different modes \cite{Zhou2025MMFRL}.

Put together, the results on these datasets demonstrate CLARGA's ability to adapt to a wide range of modalities (image, audio, text, timeseries) across different domains (pattern recognition, finance, human-computer interaction) very effectively. This is made more impressive when considering the computational complexity of CLARGA, which is relatively low compared to the state-of-the-art approaches and made insignificant when training with larger pipelines.

\subsection{Robustness on AV-MNIST}
Table \ref{tab:ablation-modalities} shows that CLARGA's adaptive design holds up well when one input modality is absent. Removing the image modality, which appears to be the more informative modality \cite{Tseng2023}, reduces CLARGA's accuracy from 77\% to 48\%. While this drop in accuracy is substantial, it is smaller than the declines in the ablations' performance. Early Fusion (Mean) drops in performance by 21\% and even the Uniform Attention variant sees a drop of 17\%. Removing residual connections or the contrastive term reduces performance further. These gains in robustness show that, by using learned edge weights and residual message passing, CLARGA enables the remaining audio information to compensate far more effectively for the missing vision information than static or mean pooling-based approaches. This demonstrates that adaptive weighting and contrastive alignment are the main factors behind this robustness. The pattern is even clearer when the image modality is dropped. CLARGA has a 29\% loss in accuracy, but still outperforms every other model (see Table \ref{tab:ablation-modalities}).

MM-Lego \cite{hemker2025multimodallegomodelmerging} applies "LegoBlocks", wrapping each pretrained encoder in a small adapter that projects its output into a common latent space and then updates it through cross attention with other modalities. When we remove the audio modality, MM-Lego faces a 20\% loss in accuracy. When the image input is removed, its accuracy falls by 34\%. These results place MM-Lego between full CLARGA and the simpler ablations in terms of robustness. In other words, model merging adapters give some level of robustness. However, without CLARGA's adaptive attention (adapted for each node and sample) and contrastive alignment, they cannot match its ability to recover when a modality vanishes.

\subsection{Complex Downstream Task: DAIC-WoZ}
Table \ref{tab:daic-woz-results} shows results on multimodal depression classification. This is a highly difficult and very specific downstream task, where models must pick up on and learn extremely subtle cues correlating to mental health \cite{He2021, Arioz2022}.
The complexity of DAIC-WoZ is demonstrated by the fact that even for models constructed specifically for the dataset, the vast majority do not surpass 80\% accuracy \cite{Valstar2016, muzammel2021end, Ringeval2017, Mamidisetti2022}, and fewer 90\%.

CLARGA attains 91.4\% accuracy on the AVEC-2016 challenge benchmark, surpassing all baselines proposed in the original challenge and recent state-of-the-art models. Under 10 fold cross-validation, CLARGA reaches 95.7\% accuracy, surpassing models evaluated using LOSO.\footnote{The only reason we do not scale to LOSO testing is due to its high compute cost. This biases the comparison in favor of LOSO-based models due to their better access to training data, putting CLARGA at a disadvantage, as demonstrated in previous DAIC-WoZ-based papers \cite{muzammel2021end, patapati2024integratinglargelanguagemodels}

}

We believe that the performance improvement stems from two main factors. Firstly, graph attention balances the uneven predictive strength \cite{muzammel2021end} of the different features and modalities present in the DAIC-WoZ. Secondly, contrastive alignment mitigates the well known scarcity and imbalance of psychiatric data \cite{patapati2024integratinglargelanguagemodels, muzammel2021end, Mamidisetti2022, Casella2025DataGaps}. This level of performance shows CLARGA's ability to adaptively fuse very different modalities and pick up on faint crossmodal patterns. Such patterns are only revealed given the context of multiple separate modalities. %

\section{Conclusion}
CLARGA delivers a %
versatile and efficient framework that consistently excels across diverse multimodal representation learning challenges. By combining adaptive graph attention, residual message passing, and contrastive alignment, it not only achieves state-of-the-art accuracy across several benchmarks but also adapts well to missing inputs. Our theoretical guarantees and low computatonal cost further demonstrate its real world practicality.
These findings show that CLARGA offers a robust and practical approach for general-purpose multimodal representation learning.

{
    \small
    \bibliographystyle{ieeenat_fullname}
    \bibliography{main}
}

\end{document}